\begin{document}

    \author{Vasileios Maroulas}
    \address{Department of Mathematics - University of Tennessee, Knoxville, TN 37996}
    \email[Corresponding author]{vmaroula@utk.edu}
    \thanks{This work has been partially supported by the ARO Grant 
        \# W911NF-17-1-0313, and the NSF DMS-1821241.}

    \author{Cassie Putman Micucci}
    \address{Department of Mathematics - University of Tennessee, Knoxville, TN 37996}
    \email{cputman@vols.utk.edu}

    \author{Adam Spannaus}
    \address{Department of Mathematics - University of Tennessee, Knoxville, TN 37996}
    \email{aspannaus@utk.edu}
    
    \keywords{
        Stability, Classification, Persistent Homology, Persistence Diagrams, Crystal Structure of Materials}
    
    \title{A Stable Cardinality Distance for Topological Classification}
    
    \begin{abstract}
        This work
incorporates topological features via persistence 
diagrams to classify point cloud data arising from materials science.
Persistence diagrams are multisets
summarizing the connectedness and holes
of given data. A new distance on the space of persistence diagrams generates relevant input 
features for a classification algorithm for materials science data. This distance measures
    the similarity of persistence diagrams using
    the cost of matching points and a regularization term corresponding to cardinality differences between diagrams. Establishing stability properties of this distance provides theoretical 
justification for the use of the distance in comparisons of such 
diagrams. 
The classification scheme succeeds in determining the crystal structure of materials
    on noisy and sparse data retrieved from synthetic atom probe tomography experiments.
    \end{abstract}
    
    \maketitle
    
    
    \section{Introduction}
    A crucial first step in understanding properties of a crystalline material is determining its
crystal structure. For highly disordered metallic alloys, such as high-entropy alloys (HEAs),
atom probe tomography (APT) gives a snapshot of the local atomic environment.
APT has two main drawbacks: experimental noise and missing data. Approximately 65\%
of the atoms in a sample are not registered in a typical experiment, and
those atoms that
are captured have their spatial coordinates corrupted by experimental noise.
As noted by \cite{kelly2013atomic} and \cite{miller2012future}, APT has a spatial 
    resolution approximately the length of the unit cell
    we consider, as seen in~\cref{fig:cells}. Hence the process is unable to see the finer details of a material, making the
    determination of a lattice structure a challenging problem. Existing algorithms for detecting the crystal 
    structure 
    \cite{chisholm2004new,hicks2018aflow,honeycutt1987molecular,larsen2016robust,moody2011lattice,togo2018spglib} are not able to
    establish the crystal lattice of an APT dataset,
    as they rely on symmetry arguments. Consequently, the field of atom probe crystallography, i.e., 
    determining the crystal
    structure from APT data, has emerged in recent years~\cite{gault2012atom}
    and~\cite{moody2011lattice}. These algorithms rely
    on knowing the global lattice structure \emph{a priori} and aim to
    determine local small-scale structures within a larger sample. For some
    materials this information is readily known, for others, such
    as HEAs, the global structure is unknown and must be inferred.
A recent work by~\cite{ziletti2018insightful} proposes a machine-learning  approach to classifying crystal structures
    of a noisy and sparse materials dataset, without 
    knowing the global structure \emph{a priori}.
The authors employ a convolutional
neural network for classifying the crystal structure by looking at
a diffraction image, a computer-generated diffraction pattern.
The authors suggest their method could be used to determine
the crystal structure of APT data or other noisy and sparse data from materials science. However, the synthetic data
considered in~\cite{ziletti2018insightful} is not a realistic
representation of experimental APT data, where about 65\% of the data
is missing~\cite{santodonato2015deviation} and is corrupted by more observational
noise~\cite{miller2012future}. Most importantly, their synthetic data is either sparse or noisy, not a combination of both. We consider a combination of noise and sparsity, such as is the case in real APT data.

\begin{figure}
    \begin{subfigure}[t]{0.48\linewidth}
        \centering
        \includegraphics[width=0.45\textwidth]{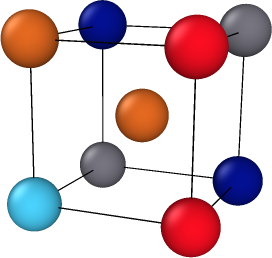}
        \caption{BCC cell}
        \label{fig:BCC}
    \end{subfigure}%
    \hfill
    \begin{subfigure}[t]{0.48\linewidth}
        \centering
        \includegraphics[width=0.45\textwidth]{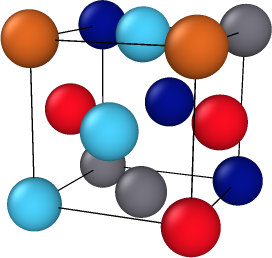}
        \caption{FCC cell}
        \label{fig:FCC}
    \end{subfigure}%
    \caption{Example of body-centered cubic, (BCC), (\textsc{a}) and face-centered
            cubic, (FCC), (\textsc{b}) unit cells
            without additive noise or sparsity. Notice there is 
            an essential topological difference between the two structures:
            The body-centered cubic structure has one atom at its center, whereas the face-centered cubic
            is hollow in its center, but has one atom in the middle of each of its faces.}\label{fig:cells}
\end{figure}

In this work, we provide a machine learning approach to classify the crystal 
structure of a noisy and 
sparse materials dataset.
Specifically, we consider materials that are either body-centered cubic (BCC) 
    or face-centered cubic (FCC), as these lattice structures are the 
    essential building blocks of HEAs~\cite{zhang2014microstructures} and have
    fundamental differences that set them apart in the case of noise-free, complete materials data.
The BCC structure has
a single atom in the center of the cube, while the FCC has a void in its
center but has atoms on the center of the cubes' faces, see~\cref{fig:cells}. These two crystal structures are distinct when viewed through the lens of Topological Data Analysis (TDA). Differentiating between
    the holes and connectedness of these two lattice structures allows us to create an
    accurate classification rule.
This fundamental distinction between BCC and FCC point clouds
    is captured well by topological methods
    and explains the high degree of accuracy
    in the classification scheme presented herein.
TDA
provides input features for machine 
learning algorithms, as well as a useful toolbox for classification. Several authors have
used TDA on real-world problems,
see~\cite{Carlsson:2004:PBS:1057432.1057449,Edelsbrunner2002,
    Marchese2,MMM,2018arXiv180302739M,
    maroulas2019bayesian, wasserman2018topological,Zomorodian2005} and the references therein. 
Persistent homology, which measures changes in topological features over different scales,
is the main framework considered by these authors.

Persistent homology is applicable to classification problems as it studies 
and differentiates holes within data as viewed in different dimensions, e.g., the space enclosed by a loop is a one-dimensional hole.
Overall, persistent homology provides a summary of the connectedness and holes (empty space in atomic cells) of data, which indirectly gives information about the shape of the data as well.
Indeed, persistent homology records when different homological features emerge and vanish in the data.
This analysis quantifies the significance of a homological feature and 
provides a tool to contend with noisy data.
The appearance and disappearance of each homological feature is calculated and recorded in a persistence
diagram. Persistence diagrams yield topological summaries
of the persistent homology of a dataset
and are rich sources of detail about underlying topological features. 
The diagrams could be used in distance-based
classifiers~\cite{carriere2017sliced,Marchese2018} or vectorized and 
input into standard classification algorithms, such as support vector 
machines~\cite{adams2017persistence, bubenik2015statistical}.

Distances on the space of  persistence diagrams yield a means of comparison between diagrams.
The Wasserstein and bottleneck distances compute the cost of the optimal matching between the
points in each persistence diagram, while allowing matching to additional points on the diagonal
to allow for cardinality differences and to prove stability properties as in \cite{Cohen-Steiner2007}.
Motivated by \cite{Marchese2018}, we consider here the $d_p^c$ distance, a distance on the space of persistence diagrams.
This distance
employs the cardinality of the persistence diagrams, as
well as distances between points in the diagrams.
It calculates the cost of an optimal matching between the persistence diagrams without any points added to the diagonal. A regularization term then considers the cardinality differences between persistence diagrams.

The stability of the $d_p^c$ distance is also verified in this paper.
This property guarantees that when the distances between point clouds go to zero, the
distances between the associated persistence diagrams
go to zero as well.
Another formulation of this stability is given in 
\cite{Chazal2014}; using a related approach, we show continuity of the mapping of point cloud to persistence diagram under the $d_p^c$ distance.
This analysis provides insight into how the cardinality of the diagrams changes 
with the size of the input point clouds. 
Additionally, using statistics on the diagram's cardinality generates corresponding 
prediction intervals, which give probabilistic bounds on the $d_p^c$ distances between persistence diagrams. The idea is that point clouds generated from the same process 
have small variability with respect to cardinality of the persistence diagrams.

The contributions 
    of this work is: \begin{enumerate}[topsep=0pt, partopsep=0pt]
        \item The stability of the $d_p^c$ distance in a continuous fashion.
        \item Theoretical and statistical bounds on the number of 1-dim holes represented in a persistence diagram based on the cardinality of the underlying point cloud.
        \item A $\dpc$ distance based classification algorithm for the crystal structure of high entropy alloys using synthetic atom probe tomography experiments.
\end{enumerate}

The work is organized as follows.  Relevant definitions and concepts necessary for 
persistent homology are presented in Section \ref{sect:ph}. Stability results of the
$d_p^c$ distance are in Section \ref{sect:stability}, as well as prediction interval bounds.
Section \ref{sect:classification Methodology} demonstrates a classification scheme for materials science data retrieved from synthetic APT experiments.
We conclude and provide future directions in Section \ref{sect:conclusions}.
    %
    %
    %
    %
    \section{Persistent Homology Background}\label{sect:ph}

This section succinctly explains the construction of persistence diagrams, which are topological summaries of the underlying space.
The Vietoris-Rips complex
provides the necessary computational link between the point cloud, a subset of $\mathbb{R}^d$ under the Euclidean distance, and its persistence diagram.
Below we give a brief summary of the necessary background. 
For a detailed treatment, see~\cite{Ed2010}.

\begin{definition}
        A $\nu$-simplex is the convex hull of an affinely independent point set of size $\nu+1$.
        \label{def:simplex}
\end{definition}

\begin{definition}
For a set of points $\mathcal{P}$, an abstract simplicial complex $\sigma$ is a collection of finite subsets of $\mathcal{P}$ such that for
        every set $A$ in $\sigma$ and every nonempty set $B \subset A$, we have that $B$ is in $\sigma$.
        The elements of $\sigma$ are called abstract simplices and are the combinatorial analogues of the geometric simplices in Def. \ref{def:simplex}.
\end{definition}

\begin{definition}
    For a given threshold $\epsilon$, the Vietoris-Rips complex is a simplicial complex formed from a set such that 
    corresponding to each subset of $\nu$ points of the set, an $\nu$-simplex is 
    included in the Vietoris-Rips complex each time the subsets have pairwise 
    distances at most $\epsilon$.
\end{definition}

The Vietoris-Rips complex can be visualized by placing a ball of radius $\epsilon/2$ at each point in the set and then adding a $\nu$-simplex at the points corresponding to the intersection of $\nu$ balls.
See~\cref{fig:VR_plot} for an illustration. For the Vietoris-Rips complex
corresponding to $\epsilon$, denoted by $VR_{\epsilon}$, it is clear that $VR_{\epsilon}
\subset VR_{\epsilon'}$ for $\epsilon < \epsilon'$. Thus we need only examine specific
$\epsilon$ values corresponding to the emergence and disappearance of homological features.
These $\epsilon$ values are recorded as ordered pairs $(b,d)$ in a persistence diagram, where
$b$ denotes the birth of a feature and $d$ its death.

As can be seen in~\cref{fig:VR_plot}, a 0-dim homological 
feature is a connected component of a simplex, a 1-dim homological
feature is a hole, such as those created by a loop or the circle $S^1$,
and a 2-dim homological feature describes voids, e.g., the inside of a sphere; see \cite{wasserman2018topological} for details.
Higher dimensional data analogously yields higher dimensional holes. 

    \begin{remark}
        Persistence diagrams can also be computed using a pertinent function $g$ from a topological space to $\mathbb{R}$.
        Such a function can act as an approximation to a point cloud; typical functions used are kernel density estimators as in \cite{fasy2014} and the distance to measure function as in \cite{Chazal2011}. 
        Homological features are born and die within the sublevel sets $g^{-1} (-\infty, t]$ as $t$ increases.
        These birth and death times create another persistence diagram,
        see~\cref{fig:VR6}.
        \label{remark:sublevel}
    \end{remark}

\begin{figure}
    \centering
    \begin{subfigure}[t]{0.38\textwidth}
        \centering
        \includegraphics[width=\textwidth]{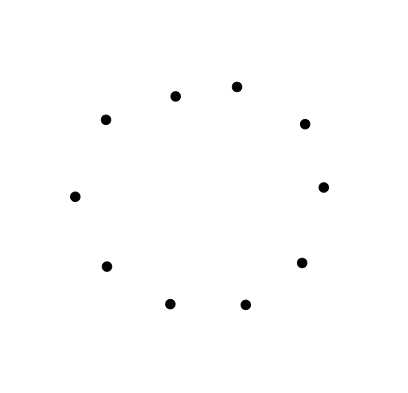}
        \caption{}
        \label{fig:VR1}
    \end{subfigure}
    \hfill%
    \begin{subfigure}[t]{0.38\textwidth}
        \centering
        \includegraphics[width=\textwidth]{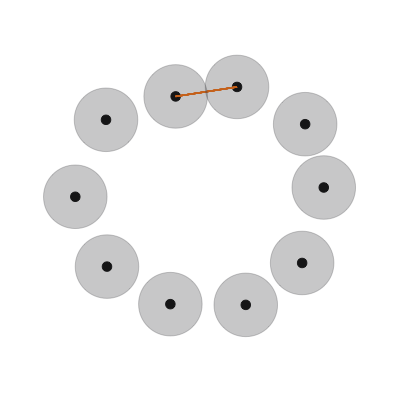}
        \caption{}
        \label{fig:VR2}
    \end{subfigure}
    \begin{subfigure}[t]{0.38\textwidth}
        \centering
        \includegraphics[width=\textwidth]{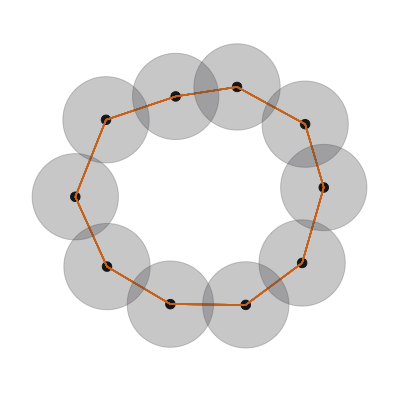}
        \caption{}
        \label{fig:VR3}
    \end{subfigure}
    \hfill%
    \begin{subfigure}[t]{0.38\textwidth}
        \centering
        \includegraphics[width=\textwidth]{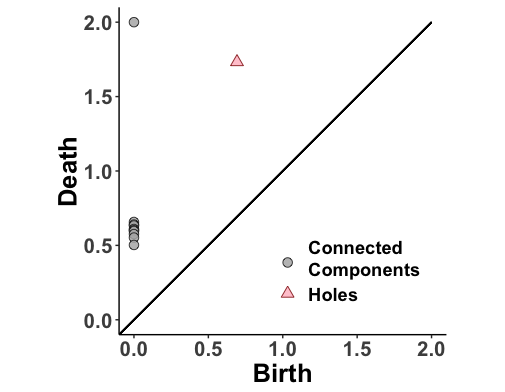}
        \caption{}
        \label{fig:VR4}
    \end{subfigure}
    \begin{subfigure}[t]{0.38\textwidth}
        \centering
        \includegraphics[width=\textwidth]{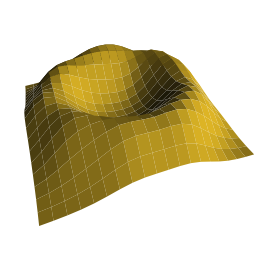}
        \caption{}
        \label{fig:VR5}
    \end{subfigure}
    \hfill%
    \begin{subfigure}[t]{0.38\textwidth}
        \centering
        \includegraphics[width=\textwidth]{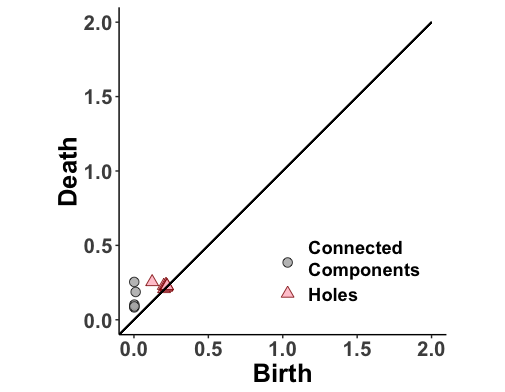}
        \caption{}
        \label{fig:VR6}
    \end{subfigure}
    \caption{Begin with a point cloud (\textsc{a}). After increasing the radius of the balls around the points, a 1-simplex (line segment) forms in the corresponding Vietoris-Rips complex, (\textsc{b}). 
        Eventually, more 1-simplices are added and a 1-dim hole forms (\textsc{c}).
        In (\textsc{d}), the persistence diagram tracks all the birth and death times, with respect to the radius $\epsilon/2$, of the homological features for each dimension. 
        Using the same points as in (\textsc{a}), the kernel density estimator function for this point cloud is plotted
        in (\textsc{e}). 
        A corresponding persistence diagram is created using sublevel sets in (\textsc{f}). 
        Note the difference between the persistence diagrams in (\textsc{d}) and 
        (\textsc{f}). The persistence diagram created in (\textsc{f}) has noisy 1-dim features that are not present in the persistence diagram created directly from the data points.
        \label{fig:VR_plot}}
\end{figure}

To calculate the similarity between diagrams for classification problems, a distance on the space of
persistence diagrams is needed.
A typical distance is the Wasserstein distance.

\begin{definition}
    The $p$-Wasserstein distance between two persistence diagrams $X$ and $Y$ is given by $   W_p (X,Y) = \left( \inf_{\eta:X \to Y} \sum_{x \in X} \| x - \eta (x) \|_{\infty}^p  \right)^{\frac{1}{p}}$,
    where the infimum is taken over all bijections $\eta$, and the points of the diagonal are added with infinite multiplicity to each diagram.
    If $p \to \infty$, then $W_{\infty} (X,Y) \; =  \inf_{\eta:X \to Y} \sup_{x \in X} \| x - \eta (x) \|_{\infty}  $ is the bottleneck distance between diagrams $X$ and $Y$.
    
\end{definition}

The Wasserstein distance yields the penalty of matched points under the optimal bijection.
    Points can be matched to the diagonal of each persistence diagram, which is assumed to have infinitely many points with infinite multiplicity; this 
    ensures that a bijection between $X$ and $Y$ actually exists, since $X$ and $Y$ may not have the same cardinality.
    In other words, the Wasserstein distance gives no explicit penalty for differences in cardinality between two diagrams.
    Instead, the Wasserstein distance penalizes unmatched points by using their distance to the diagonal.
    However, cardinality differences may play a key role in machine learning problems, and
    to that end,~\cite{Marchese2018} proposed the $d_p^c$ distance given below.

\begin{definition}
    Let $X$ and $Y$ be two persistence diagrams with cardinalities $n$ and $m$ 
    respectively such that $n \leq m$ and denoted $X = \{ x_1, \ldots, x_n \}$,
    $Y = \{ y_1, \ldots ,y_m\}$.
    Let $c >0$ and $1 \leq p < \infty$ be fixed 
    parameters. The $d_p^c$ distance between two persistence diagrams $X$ and $Y$ is
    \begin{equation}
    d_p^c(X,Y) = \left( \frac{1}{m} \left( \min_{\pi \in \Pi_m} \sum_{\ell=1}^n \min(c,\|x_{\ell}-y_{\pi (\ell)} \|_{\infty})^p + c^p |m-n| \right) \right)^{\frac{1}{p}},\label{eqn:dpc}
    \end{equation}
    where $\Pi_m$ is the set of permutations of $(1, \dots, m)$. If $m < n$,
    define $d_p^c (X,Y) := d_p^c (Y,X)$.
    
    \label{def:dpc}
\end{definition}

\begin{figure}
    \centering
    \begin{subfigure}[t]{0.44\textwidth}
        \centering
        \includegraphics[width=\textwidth]{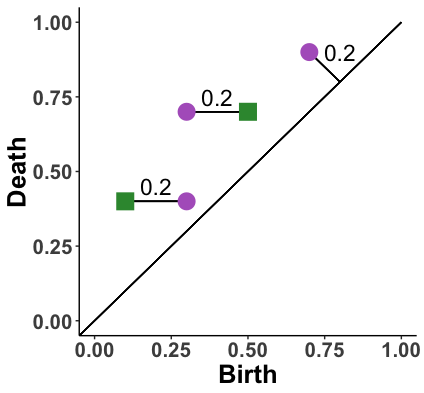}
        \caption{}\label{fig:wass_dist}
    \end{subfigure}
    \hfill
    \begin{subfigure}[t]{0.44\textwidth}
        \centering
        \includegraphics[width=\textwidth]{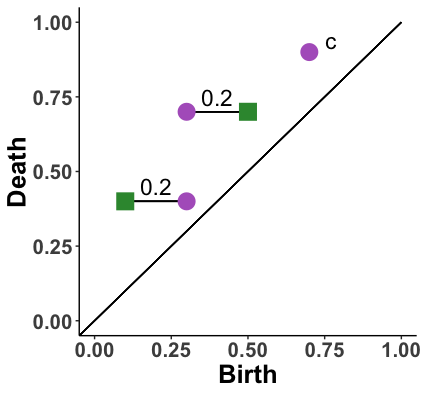}
        \caption{}\label{fig:dpc_dist}
    \end{subfigure}
    
    \caption{Consider two persistence diagrams, one given by the green squares and another by the purple circles. (\textsc{a}) The Wasserstein distance 
        imposes a cost of 0.2 to the extra purple point (the $\ell^{\infty}$-distance to the diagonal). (\textsc{b}) The $d_p^c$ distance imposes a penalty $c$ on the point instead. 
        \label{fig:dpc_wass}}
\end{figure}

\begin{remark} 
    Note that this distance can be applied to arbitrary 
        point clouds with finite cardinality as well.
    As shown in~\cite{Marchese2018}, a smaller $c$ in Eq.~\eqref{eqn:dpc} accounts for local geometric differences, while a larger $c$ focuses on global geometry.
    It is precisely by considering 
    differences in cardinality that the $d_p^c$ distance can distinguish between features of the point cloud that other distances may miss.
        Also 
        in Eq.~\eqref{eqn:dpc}, if $X$  is fixed and $m \to \infty$ , then $d_p^c (X,Y) \to c$.
\end{remark}


\section{Stability Properties for $d_p^c$ distance}\label{sect:stability}

The stability  of the $d_p^c$ distance is proved in this section.
Stability of the distance under investigation means that small perturbations in the underlying space result
in small perturbations of the generated persistence diagrams.  
Adopting the approach of estimating a point cloud via a pertinent function, e.g., a kernel density estimator
as in~\cite{fasy2014}, persistence diagrams may be constructed using sublevel sets as 
in~\cref{fig:VR6} and Remark \ref{remark:sublevel}.
Their differences can be computed using the Wasserstein and bottleneck distances.
Using this functional representation, stability of the Wasserstein and bottleneck
distances has been shown in~\cite{Cohen-Steiner2010} and~\cite{Cohen-Steiner2007}
respectively, by verifying Lipschitz (and respectively H\"{o}lder) 
continuity of the mapping from the underlying function
of the data to its persistence diagram in the bottleneck and Wasserstein distances.
Considering discrete point clouds whose distances shrink to zero, 
Theorem~\ref{thm:stability} shows that the
distance between persistence diagrams goes to zero as well.

\begin{theorem}[Stability Theorem]\label{thm:stability}
    Consider $c >0$ and $\,1 \leq p < \infty $. Let $A$ be a finite nonempty point cloud in $\mathbb{R}^d$. 
    Suppose that $\{A_i\}_{i \in \mathbb{N}}$ is a sequence of finite nonempty point clouds such that $d_p^c (A,A_i) \to 0$ as $i \to \infty$. 
    Let $X^{k} and \; X_i^{k}$ be the $k$-dim persistence diagrams created from the Vietoris-Rips complex for $A$ and $A_i$ respectively.  
    Then $d_p^c (X^{k},X_i^{k}) \to 0 $ as $i \to \infty$. 
\end{theorem}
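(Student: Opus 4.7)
My plan is to trace the hypothesis $\dpc(A, A_i) \to 0$ through each step of the construction of the persistence diagrams. First I would unpack what the hypothesis tells us about the point clouds themselves. Expanding Definition~\ref{def:dpc} with $A$ and $A_i$ in place of persistence diagrams, the distance is the $p$-th root of a matching cost plus the cardinality penalty $c^p |m_i - n|/m_i$, where $n = |A|$ and $m_i = \max(|A|, |A_i|)$. Since both terms are nonnegative and their sum tends to zero, the cardinality penalty forces $|m_i - n|/m_i \to 0$, and because cardinalities are integers this gives $|A_i| = n$ for all large $i$. With matched cardinalities, the distance reduces to an average of $\min(c, \|\cdot\|_\infty)^p$ terms over an optimal bijection, and its convergence to zero forces that, after relabeling $A_i = \{a_{i,1}, \ldots, a_{i,n}\}$ according to the optimal permutation, each $a_{i,\ell} \to a_\ell$ in $\ell_\infty$.

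Next I would transfer this pointwise convergence of the clouds into convergence of their Vietoris-Rips filtrations. A $\nu$-simplex appears in $VR_\epsilon$ of a point cloud exactly when all $\binom{\nu+1}{2}$ pairwise $\ell_\infty$ distances among a chosen $(\nu+1)$-subset are at most $\epsilon$. Because the relabeling gives $a_{i,\ell} \to a_\ell$ for each $\ell$, every pairwise distance in $A_i$ converges to its counterpart in $A$, so the filtration parameter at which each simplex is added to $VR(A_i)$ is within $O(\eta_i)$ of the corresponding parameter in $VR(A)$, where $\eta_i := \max_\ell \|a_\ell - a_{i,\ell}\|_\infty \to 0$. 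Standard bottleneck stability of Vietoris-Rips persistence then yields $W_\infty(X^k, X_i^k) \to 0$.

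The remaining, and most delicate, step is upgrading bottleneck convergence to convergence in $\dpc$. The two distances agree on matched off-diagonal pairs but diverge on unmatched points: bottleneck charges the distance to the diagonal, while $\dpc$ imposes the flat penalty $c$ per unmatched point. To close the argument, I would produce an explicit bijection between $X^k$ and $X_i^k$ for $i$ large, by enumerating candidate birth/death simplex pairs in the VR filtration on $n$ points and tracking how their filtration values vary continuously with the point coordinates. Since the number of candidates is bounded in terms of $n$ alone, and since a strictly off-diagonal feature of $X^k$ cannot be destroyed or duplicated by an arbitrarily small perturbation, this yields $|X^k| = |X_i^k|$ for $i$ large, with a matching cost bounded by a constant times $\eta_i^p$; the cardinality penalty then vanishes and $\dpc(X^k, X_i^k) \to 0$. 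The main obstacle is precisely this last step: ruling out spurious low-persistence features of $X_i^k$ that have no counterpart in $X^k$, which is the only place where the flat $c$-penalty in $\dpc$ could obstruct convergence that bottleneck stability already guarantees.
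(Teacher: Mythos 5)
Your opening step is exactly the paper's: the cardinality penalty in Definition~\ref{def:dpc} forces $|A_i|=|A|$ for all large $i$ (this is the paper's Lemma~\ref{lemma:lemma1}), and the remaining matching cost then forces pointwise convergence of the relabeled points, which immediately gives convergence of the pairwise-distance matrices (the paper's Lemma~\ref{lemma:lemma3}(i)). After that, however, your argument does not close. The detour through bottleneck stability buys nothing here: $W_\infty(X^k,X_i^k)\to 0$ is perfectly compatible with $X_i^k$ having extra near-diagonal points, and under $\dpc$ each such unmatched point costs a flat penalty of order $c^p$, so bottleneck convergence cannot be ``upgraded.'' The entire content of the theorem is the cardinality statement $|X^k|=|X_i^k|$ for large $i$, and this is precisely the step you leave open --- your last sentence concedes that ruling out spurious low-persistence features of $X_i^k$ is ``the main obstacle'' rather than resolving it. Continuity of the filtration values of candidate creator/destroyer pairs, which is what you invoke, is not sufficient: the number of off-diagonal points of a Vietoris--Rips diagram is not a continuous function of the point coordinates; it jumps exactly when ties among pairwise distances are broken. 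A creator and destroyer entering the filtration at equal thresholds contribute no diagram point, while an arbitrarily small perturbation separating those two thresholds creates a new off-diagonal point with no counterpart in $X^k$. Neither the boundedness of the number of candidate pairs nor the perturbation-stability of the existing off-diagonal points of $X^k$ excludes this discrete event.

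What the paper uses to close this gap is a combinatorial, not metric, statement: Lemma~\ref{lemma:lemma3}(ii) asserts that for large $i$ the \emph{order} of the entries of the distance matrices $D_A$ and $D_{A_i}$ coincides. Since the Vietoris--Rips filtration is determined by the order in which pairwise distances are exceeded, identical orderings force the two filtrations to be combinatorially identical under the matching $\pi_i$ --- the same simplices appear in the same sequence --- hence the same number of $k$-dimensional features is created and destroyed, giving $|X^k|=|X_i^k|$ with birth/death values that are entries of $D_A$ and $D_{A_i}$ and are therefore close by part (i). This order-preservation lemma is the ingredient your outline is missing; without it, or some substitute such as a genericity (no duplicate distances) hypothesis on $A$, the final step of your proof cannot be completed. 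Note also that tie-breaking is a genuine danger rather than a technicality: when $D_A$ has duplicate entries, a perturbation separating them can change the diagram cardinality, which is exactly the scenario your ``main obstacle'' sentence points to, and it is the case that any complete argument --- the paper's included, which addresses it only ``up to permutation of duplicate entries'' --- must control explicitly.
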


Note that Theorem~\ref{thm:stability} does not depend on a function created from the points such as a kernel density estimator as in~\cite{fasy2014}, but simply on the points themselves and the Vietoris-Rips complex generated from these points. 
In fact, Theorem \ref{thm:stability} shows that the mapping from a point cloud to the persistence diagram of its Vietoris-Rips complex is continuous under the $d_p^c$ distance.
This continuous-type stability result is weaker than Lipschitz stability.
In order to prove Theorem~\ref{thm:stability}, we first show that if the $d_p^c$ distance between the underlying point clouds goes to 0, then eventually the size of the point clouds must be the same. 

\begin{lem}\label{lemma:lemma1}
    Let $A$ and $A_i$ be as in Theorem \ref{thm:stability} such that $d_p^c (A,A_i) \to 0$ as $i \to \infty$. Then $A_i$ and A have the same number of points for $i \geq N_0$ for some $N_0 \in \mathbb{N}$.
\end{lem}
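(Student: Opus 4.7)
The plan is to extract a lower bound on $d_p^c(A,A_i)$ directly from the cardinality-regularization term in Definition~\ref{def:dpc}, and then argue that because cardinalities are integers, this forces $|A_i|=|A|$ eventually. Let $n=|A|$ and $n_i=|A_i|$. By discarding the nonnegative matching-cost contribution in~\eqref{eqn:dpc}, one obtains the clean lower bound
\begin{equation*}
d_p^c(A,A_i)^p \;\ge\; \frac{c^p\,|n_i-n|}{\max(n_i,n)}.
\end{equation*}
I would split on whether $n_i<n$ or $n_i\ge n$.

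In the easy case $n_i<n$, the denominator is simply the fixed constant $n$, so the bound reads $n-n_i < d_p^c(A,A_i)^p\, n/c^p$. Since $d_p^c(A,A_i)\to 0$, the right-hand side becomes strictly less than $1$ for $i$ large; but $n-n_i$ is a nonnegative integer, so it must equal $0$, giving $n_i=n$.

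The case $n_i\ge n$ is the one requiring a little care, since a priori $n_i$ could grow without bound, in which case $(n_i-n)/n_i$ would tend to $1$ rather than $0$ and the argument above would fail. The remedy is a two-step bootstrap. Pick $i$ large enough that $d_p^c(A,A_i)<c$; then the inequality $c^p(n_i-n)/n_i < d_p^c(A,A_i)^p$ rearranges to
\begin{equation*}
n_i \;<\; \frac{n}{1 - d_p^c(A,A_i)^p/c^p},
\end{equation*}
so $n_i$ is bounded by some constant $M$ (in fact $M$ can be taken close to $n$ once $d_p^c(A,A_i)$ is small). With this uniform bound in hand, the lower bound becomes $n_i-n < M\,d_p^c(A,A_i)^p/c^p$, whose right-hand side tends to $0$; as before, integrality forces $n_i-n=0$ eventually.

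Taking $N_0$ to be the maximum of the thresholds produced by the two cases completes the proof. The only genuine subtlety, as indicated, is the need to first bound $n_i$ from above before translating the ratio convergence $(n_i-n)/n_i\to 0$ into integer convergence $n_i-n\to 0$; the rest is an immediate consequence of integrality and the explicit form of the regularization term.
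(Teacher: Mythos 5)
Your proposal is correct and takes essentially the same approach as the paper: both proofs discard the nonnegative matching cost, lower-bound $d_p^c(A,A_i)^p$ by the regularization term $c^p\,|n_i-n|/\max(n_i,n)$ (where $n=|A|$, $n_i=|A_i|$), and conclude from the integrality of cardinalities. The only difference is in handling the case $n_i>n$: the paper notes that $z\mapsto (z-n)/z$ is increasing on integers $z\geq n+1$, so whenever $n_i>n$ the distance is uniformly bounded below by $c/(n+1)$, contradicting convergence to zero, whereas you first bootstrap an upper bound on $n_i$ and then let integrality finish --- both devices resolve exactly the subtlety you flag, so the arguments are equivalent in substance.
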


\begin{proof} 
    Denote by |A| the number of points in the point cloud A.
    Suppose that $|A_i| \neq |A|$
    infinitely often. 
    Since $d_p^c (A,A_i) \to 0$, for every $\epsilon > 0$, there is
    an $N \in \mathbb{N}$ such that $i \geq N$ implies that $d_p^c
    (A,A_i) < \epsilon$.
    Let $\epsilon = \frac{c}{|A|+1}$, noting that $|A|$ is fixed. 
    By assumption $|A_i| < |A|$, $|A_i| > |A|$, or both, infinitely
    often. If $|A| < |A_i|$, then by Def. \ref{def:dpc}
    \begin{equation}
    d_p^c (A,A_i) \geq  \left(c^p \frac{|A_i|-|A|}{|A_i|}  \right)^\frac{1}{p} 
    \geq c \frac{|A_i|-|A|}{|A_i|}.
    \label{equation:dpclemma}
    \end{equation}
    
    The function $h: \mathbb{N} \to \mathbb{R}$ given by $h(z) = \frac{z - |A|}{z}$ is strictly increasing. 
    Whenever $|A| < |A_i|$, we have $|A_i| \geq |A| + 1$.
    The restriction of $h$ to $\{|A|+1, |A| + 2, |A| + 3, \ldots \} $ achieves its minimum at $|A| + 1$.
    This shows that the RHS of Eq.~\eqref{equation:dpclemma} is greater than or equal to $ \frac{c}{|A|+1},$
    whenever $|A| < |A_i|$, which by assumption happens infinitely often. This contradicts $d_p^c (A,A_i) < \epsilon$  for all $i \geq N$. The  case where $|A| > |A_i|$  follows similarly. $\qed$
\end{proof}

\begin{lem}\label{lemma:lemma3}
   Let $A$ and $A_i$ be as in Theorem \ref{thm:stability}.
    Suppose the points of each point cloud $A_i$ are ordered so that 
    $A_i = \{ a_{\pi_i (1)}, a_{\pi_i (2)},
    \ldots, a_{\pi_i(|A|)} \}$, where $\pi_i$ is the permutation used to calculate the $d_p^c$ distance between $A_i$ and $A$ as in Eq.~\eqref{eqn:dpc}.
    Let $D_A$ and $D_{A_i}$ be the distance
    matrices for the points of $A$ and $A_i$ respectively, 
    i.e., the $k l$-th entry of $D_A$ is $\|a_{k}-a_l \|_d $. Then,
    
    \begin{enumerate}[label=(\roman*)]
        \item $ \|D_A - D_{A_i} \|_{\infty} \to 0 $ as $i \to \infty$, and
        \item for some $N_1 \in \mathbb{N}$, the order of the entries of the upper triangular portion of $D_A$ and $D_{A_i}$ is the same for $i \geq N_1$, up to permutation when either $D_A$ or $D_{A_i}$ have duplicate entries.
    \end{enumerate}
\end{lem}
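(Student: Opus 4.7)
The plan is to reduce to cardinality‑matched point clouds via Lemma~\ref{lemma:lemma1}. For $i\ge N_0$, $|A_i|=|A|=:n$, so the cardinality penalty in $d_p^c$ disappears and
\[
 d_p^c(A,A_i)^p \;=\; \frac{1}{n}\sum_{\ell=1}^{n}\min\!\bigl(c,\;\|a_\ell-a^{(i)}_{\pi_i(\ell)}\|_\infty\bigr)^{p},
\]
where $\pi_i$ is the optimal permutation. Since each summand is nonnegative and their weighted sum vanishes, $\min(c,\|a_\ell-a^{(i)}_{\pi_i(\ell)}\|_\infty)\to 0$ for every fixed $\ell\in\{1,\dots,n\}$. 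In particular, for $i$ large enough the quantity inside the minimum drops below $c$, the cap becomes inactive, and $\|a_\ell-a^{(i)}_{\pi_i(\ell)}\|_\infty\to 0$. By equivalence of norms on $\mathbb{R}^d$, the same convergence holds for whichever norm $\|\cdot\|_d$ defines the distance matrices.

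For part~(i), I relabel the points of $A_i$ by $\pi_i$, so the $kl$-entry of $D_{A_i}$ becomes $\|a^{(i)}_{\pi_i(k)}-a^{(i)}_{\pi_i(l)}\|_d$. Two applications of the reverse triangle inequality then give
\[
\bigl|[D_A]_{kl}-[D_{A_i}]_{kl}\bigr| \;\le\; \|a_k-a^{(i)}_{\pi_i(k)}\|_d+\|a_l-a^{(i)}_{\pi_i(l)}\|_d,
\]
and each term on the right tends to zero by the previous paragraph. Taking the maximum over the finite index set of $(k,l)$ yields $\|D_A-D_{A_i}\|_\infty\to 0$.

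For part~(ii), I compare upper-triangular entries pairwise. Let $\delta$ be the smallest \emph{strictly positive} gap between distinct entries of the upper triangle of $D_A$; this minimum is attained since the collection is finite. By (i), pick $N_1\ge N_0$ such that $\|D_A-D_{A_i}\|_\infty<\delta/2$ for all $i\ge N_1$. Then for any pair of upper-triangular entries with $[D_A]_{kl}<[D_A]_{mn}$ the corresponding entries of $D_{A_i}$ still satisfy $[D_{A_i}]_{kl}<[D_{A_i}]_{mn}$, so every strict inequality is inherited. When $D_A$ has equal upper-triangular entries, their counterparts in $D_{A_i}$ may order in either direction but only among themselves; the same is true for any tied entries in $D_{A_i}$. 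This is precisely the ``up to permutation of duplicated entries'' caveat in the statement.

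The only delicate points are the capping by $\min(c,\cdot)$, which is handled by observing that once each pairwise assignment error is below $c$ the cap can be dropped, and the treatment of duplicate entries, which requires the explicit gap $\delta$ and the permutation disclaimer; everything else is a direct triangle-inequality argument on finitely many indices.
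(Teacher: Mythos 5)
Your proof is correct and follows essentially the same route as the paper's: reduce to equal cardinalities via Lemma~\ref{lemma:lemma1}, deduce that the matched point distances vanish (so the cap $\min(c,\cdot)$ eventually becomes inactive), apply the triangle inequality for (i), and use a separation-of-entries argument for (ii). Your part (ii), built on the minimal positive gap $\delta$, is a cleaner direct rendering of the paper's interleaving-sequence/contradiction argument, and your explicit appeal to norm equivalence for the $\|\cdot\|_\infty$ versus $\|\cdot\|_d$ discrepancy tidies a point the paper glosses over.
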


\begin{proof}
    (i) Let $A = \{a_1, \ldots a_k \}$, $A_i = \{a^i_1, \ldots a^i_k \} $, and ${\lambda}^i_{\alpha} = \|a_{\alpha}-a^i_{\pi_i(\alpha)} \|_d $ for the permutation $\pi_i$ in the $d_p^c$ distance between $A_i$ and $A$.
    Suppose that $d_p^c (A,A_i) \to 0$. 
    Note that since $c$ is fixed, then by Lemma \ref{lemma:lemma1}, there is some $N_c$ such that eventually 
    $d_p^c (A_i,A) = \left( \frac{1}{|A|} \min_{\pi_i \in \prod_{|A|}} \sum_{\ell = 1}^{|A|} \| a_{\ell} - a_{\pi_i(\ell)} \|_d^p \right)^{\frac{1}{p}} $ 
    for $i \geq N_c$.
    By assumption $d_p^c (A,A_i) \to 0$, which shows that  $|A|^{-\frac{1}{p}} \|  \lambda \|_p \to 0$ as $i \to \infty$.
    Thus  $ \| \lambda^i \|_p \to 0$ as $i \to \infty$.
    
    Now, let $E = D_A - D_{A_i}$. 
    \begin{align*}
    \| E \|_{\infty} &= \max_{k,l}  \big| \|a_k - a_l\|_d - \|a_k^i - a_l^i \|_d \big| \\
    &= \max_{k,l}  \big| \|a_k - a_l\|_d + \|a_l - a_k^i \|_d - \|a_l - a_k^i \|_d  - \|a_k^i - a_l^i \|_d \big| \\ &\leq \big| \|a_k - a_l\|_d - \|a_l - a_k^i \|_d \big| + \big| \|a_k^i - a_l^i \|_d - \|a_l - a_k^i \|_d  \big| \\ &\leq \|a_k - a_k^i \|_d + \|a_l - a_l^i \|_d \numberthis\label{equation:sum}
    \end{align*}
    The last term in Eq.~\eqref{equation:sum} goes to 0 as $i \to \infty$, proving (i).\\
    (ii) Suppose that the $m$ distinct upper triangular entries of $D_A$ are ordered from smallest to largest, say $d_1^A < d_2^A < \cdots d_m^A$, where $m \leq {|A|(|A|-1)/2}$.  For $\eta \in \{1,\ldots ,m+1 \}$ let $h_{\eta} \subset [0,\infty)$ be a sequence such that $h_1 < d_1^A < h_2 < d_2^A < \cdots < h_{m} < d_m^A < h_{m+1}  $. 
    Let  $\|D_A - D_{A_i} \|_{\infty} < \frac{h}{2}$, where $h = \min_{\eta_1,\eta_2 \in \{1,\ldots ,m+1 \}} \{ |h_{\eta_1} - h_{\eta_2} | \}$.
    We show that there exists a sequence $g_{\eta}$ such that $|h_{\eta} - g_{\eta}|<2h $ for each $\eta \in \{1,\ldots ,m+1 \}$ and $h_{\eta} < d_j^A < h_{\eta +1}  $ implies $g_{\eta} < d_j^{A_i} \leq g_{\eta +1}  $.
    Let $h_{\eta} < d_j^A  < h_{\eta + 1}$, and suppose that it is not the case that  $ h_{\eta} < d_j^{A_i} \leq  h_{\eta +1} $. 
    Since $\|D_A - D_{A_i} \|_{\infty} < \frac{h}{2}$, then either $d_j^{A_i} \in (h_{\eta - 1}, h_{\eta}]$ or $ d_j^{A_i} \in (h_{\eta+1}, h_{\eta +2}]$.
    If the first case is true, then take $g_{\eta} = d_j^A - \frac{h}{2}$. 
    If the second, then take $g_{\eta} = d_j^A + \frac{h}{2}$. 
    This proves the existence of the sequence.
    Now proceeding by contradiction, if the lemma does not hold for some entries $d^A_j \in D_A$ and $d^{A_i}_j \in D_{A_i}$, then take 
    $\|D_A - D_{A_i} \|_{\infty} < \frac{1}{2} |d^A_j - d^{A_i}_j |  $. $\qed$
\end{proof}

\begin{proof}[Proof of Theorem \ref{thm:stability}]
    By Lemma~\ref{lemma:lemma1}, take $|A_i| = |A|$ without loss of generality. 
    By Lemma~\ref{lemma:lemma3} (i), $ \|D_A - D_{A_i} \|_{\infty} \to 0 $ as $i \to \infty$. 
    If the Vietoris-Rips complex were computed at every threshold value in $[0, \infty)$, then the birth and
    death times of all features of all dimensions would be distances between points in the underlying point
    cloud (including the birth time of 0 in the 0-dim diagram). 
    Since the order of the entries of $D_A$ and $D_{A_i}$ may be taken to be the same 
    from Lemma~\ref{lemma:lemma3} (ii), the same number of simplices are formed 
    in the complexes for $A$ and $A_i$ for each dimension of simplex.
    Also, the labels of the simplices according to the points of $A$ and $A_i$ are given from the permutation
    $\pi_i$ in Lemma~\ref{lemma:lemma3} (i).
    
    Now, for 0-dim it is clear that for the cardinalities of the persistence diagrams, $|X^{0}| = |X_i^{0}|$
    since for the sizes of their associated point clouds, $|A_i| = |A|$.
    For a higher dimensional feature ($k \geq 1$) to appear in the complex, we must have that a certain
    number of the distances are less than or equal to the threshold $\epsilon$ and a certain number of the
    distances are greater than $\epsilon$. 
    Lemma \ref{lemma:lemma3} (ii) shows that although the thresholds where the features are created may be
    different, the same number of features are formed in the Vietoris-Rips complexes of $A$ and $A_i$, and
    these  features are formed in the same order and with the points that correspond under $\pi_i$.
    
    If $X^{k} = \{ x_{1}, x_2, \dots, x_{|X^{k}|} \} $ and 
    $X_i^{k} = \{ x_{1}, x_2, \dots, x_{|X_i^{k}|} \} $, then
    we have that $|X^{k}| = |X_i^{k}|$ and that $d_p^c 
    (X^{k}, X_i^{k}) < 2h$. Thus $d_p^c 
    ( X^{k}, X_i^{k}) \to 0 $ as $i \to \infty$. $\qed$
\end{proof} 

To provide a practical way to control $c$ in computing the $d_p^c$ distance 
of Eq.~\eqref{eqn:dpc} and
consequently compute the possible fluctuations of the $d_p^c$ distance, a 
probabilistic upper bound, which relies on least squares, is provided. 
Specifically, the following analysis gives predictions on the number of 1-dim holes
represented in the persistence diagram, which we denote by $b_1$. The 
parameter $b_1$ relies on the number of connected
components (or equivalently the number of points in the point cloud) represented 
in the persistence diagram, denoted by $b_0$.

    \begin{definition}[\cite{pfender2004kissing}]
        The kissing number in $\mathbb{R}^d$ is the maximum number of nonoverlapping unit spheres that can be arranged so that each touches another common central unit sphere. 
    \end{definition}
    
    \begin{lem}[\cite{Goff2011}]
        For a finite point cloud with no more than $\rho$ points in $\mathbb{R}^d$ under the Euclidean distance,
        let $M_d (\rho)$ denote the maximum possible number of 1-dim holes in the Vietoris-Rips complex for the point cloud for a given threshold.
        Then
        \begin{equation}
        M_d (\rho) \leq (K_d-1) \rho .
        \end{equation}
        \label{lemma:goff}
    \end{lem}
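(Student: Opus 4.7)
The plan is to reduce the bound on $\beta_1$ to a local, vertex-wise count, and then extract the kissing-number factor from a Euclidean packing argument.

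First I would observe that $\beta_1(VR_\epsilon)$ is determined by the $2$-skeleton of $VR_\epsilon$, and that the complex is a flag complex: every triangle whose three vertices have pairwise distances at most $\epsilon$ is automatically filled. Thus the $1$-cycles that survive to give non-trivial classes in $H_1(VR_\epsilon)$ are precisely those that cannot be pasted over by a collection of such $\epsilon$-triangles. Choosing a Euclidean minimum spanning forest of the $1$-skeleton and noting that the shortest edge at every vertex is forced into this forest by the cut property, the classes of $H_1(VR_\epsilon)$ can be enumerated via non-forest edges whose fundamental cycles are not boundaries in the flag complex.

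Next I would charge each surviving non-forest edge to one of its endpoints and show that any single vertex $v$ receives at most $K_d - 1$ charges. The key geometric observation is that two such non-forest edges $vu_i$ and $vu_j$ incident to $v$ corresponding to distinct surviving classes must satisfy $\|u_i - u_j\| > \epsilon$, since otherwise the triangle $\{v,u_i,u_j\}$ would be filled and would identify the two classes. By the law of cosines, any family of points within distance at most $\epsilon$ of $v$ with pairwise distance strictly greater than $\epsilon$ yields unit vectors from $v$ that are pairwise separated by angle greater than $60^\circ$, and hence by the definition of the kissing number at most $K_d$ of them exist. The subtraction of one reflects the fact that one incident edge at $v$, the shortest, is already consumed by the spanning forest and thus does not contribute to $H_1$. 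Summing this per-vertex bound over the at most $\rho$ vertices yields $\beta_1(VR_\epsilon) \leq (K_d - 1)\rho$.

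The main obstacle is making the charging scheme rigorous so that distinct surviving classes correspond to distinct non-forest edges at $v$, and so that the pairwise-distance condition at $v$ is genuinely forced. A single $\epsilon$-triangle $\{v, u_i, u_j\}$ certainly identifies two fundamental cycles passing through $vu_i$ and $vu_j$, but the fundamental cycle of a non-forest edge can have length larger than three, so "not killed by triangles" is only a necessary condition for a class to survive. The delicate step is to select minimal-length cycle representatives around each $v$ and verify that the packing inequality above applies to the edges they traverse at $v$; this reconciliation of flag-complex absorption with Euclidean sphere packing is the technical heart of Goff's original argument.
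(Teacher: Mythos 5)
First, a point of reference: the paper does not prove this lemma at all --- it is imported verbatim from Goff (2011) and used as a black box in the proof of Proposition~\ref{proposition:holes} --- so your attempt can only be judged on its own merits. Your packing ingredient is correct and is exactly the right one: if $\|u_i-v\|\le\epsilon$ for all $i$ and $\|u_i-u_j\|>\epsilon$ for $i\neq j$, the law of cosines gives $\cos\theta_{ij}<\tfrac12$, so the unit vectors $(u_i-v)/\|u_i-v\|$ are pairwise more than $60^{\circ}$ apart and there are at most $K_d$ of them. The genuine gap is in the combinatorial step that is supposed to feed this bound. Your key claim --- that two non-forest edges $vu_i$, $vu_j$ whose fundamental cycles represent independent surviving classes must satisfy $\|u_i-u_j\|>\epsilon$ --- is false. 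When $\|u_i-u_j\|\le\epsilon$ and $u_iu_j$ is itself a non-forest edge, the filled triangle $\{v,u_i,u_j\}$ only imposes the relation $[C_{vu_i}]+[C_{vu_j}]+[C_{u_iu_j}]=0$ in $H_1$; this expresses the \emph{sum} of the two classes as a third class and is perfectly compatible with their independence. Concretely: place $v,u_1,u_2$ pairwise at distance $0.9\epsilon$ (so the triangle is filled), join $v$ to $u_1$ by a path $v,p_1,p_2,u_1$ and $v$ to $u_2$ by a path $v,q_1,q_2,u_2$ with consecutive distances $0.8\epsilon$, and make all remaining pairwise distances exceed $\epsilon$ (this is realizable in $\mathbb{R}^2$). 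The minimum spanning tree is the six path edges, the three triangle edges are non-forest edges, $\beta_1=2$, and the classes of $C_{vu_1}$ and $C_{vu_2}$ form a basis of $H_1$ even though $\|u_1-u_2\|=0.9\epsilon\le\epsilon$. So the per-vertex bound of $K_d-1$ charges does not follow; the same defect breaks your ``minus one for the forest edge'' step, which needs $\|u_i-w\|>\epsilon$ for the forest neighbor $w$ and has no justification either. You candidly flag this in your last paragraph, but what you flag as ``the delicate step'' is in fact the entire content of the lemma, so the proposal is a plan with the hard part missing.

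The standard way to let the packing bound do its work avoids spanning forests and charging altogether: add the vertices one at a time, and when a vertex $v$ is added note that its closed star is a cone over its link $L$ in the complex built so far. Mayer--Vietoris with $A$ the old complex and $B$ the (contractible) star gives $\beta_1(A\cup B)\le\beta_1(A)+\tilde{\beta}_0(L)$, i.e., each new vertex raises $\beta_1$ by at most the number of connected components of $L$ minus one. Choosing one representative point from each component of $L$ yields points within $\epsilon$ of $v$ that are pairwise more than $\epsilon$ apart --- otherwise the flag property would merge two components --- so $L$ has at most $K_d$ components by precisely your packing argument, and summing over the at most $\rho$ vertices gives $\beta_1\le(K_d-1)\rho$. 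I recommend recasting your proof in this form: it retains your correct geometric lemma, replaces the unprovable charging claim with a two-line exact-sequence argument, and delivers the stated bound for every threshold simultaneously.
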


\begin{proposition}
    Consider a point cloud in $\mathbb{R}^d$ with $\rho$ points and its associated persistence diagram. Let
    $B_1$ denote the possible range of the number of 1-dim holes $b_1$. Then $B_1$ is such that
    $\{0,1,\ldots, \lfloor\frac{\rho}{2}\rfloor -1 \} \subseteq B_1 \subseteq \{0,1, \ldots, 
    \frac{1}{2}(K_d-1) \rho^2 (\rho-1) \},$ i.e., the possible range of $b_1$ is expanding as the number of points, $b_0$, in the point cloud increases.
    \label{proposition:holes}
\end{proposition}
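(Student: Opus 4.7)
The plan is to prove the two inclusions separately. For the upper inclusion I will combine Lemma \ref{lemma:goff} with a counting argument over the discrete set of thresholds at which new 1-dim holes can appear. For the lower inclusion I will exhibit, for each integer $k$ in the prescribed range, an explicit $\rho$-point configuration whose persistence diagram contains exactly $k$ one-dim features.

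I would first establish the upper inclusion. A new 1-dim hole in the persistence diagram can only be born when a new 1-simplex is added to the Vietoris-Rips complex, since inserting a 2- or higher simplex can only kill cycles, not create them. New edges appear exactly at the thresholds equal to pairwise distances between points of the cloud, so the set of possible birth thresholds has cardinality at most $\binom{\rho}{2}=\rho(\rho-1)/2$. At any one of these thresholds, Lemma \ref{lemma:goff} bounds the number of 1-dim holes present in the Vietoris-Rips complex by $(K_d-1)\rho$, and hence at most that many can have just come into existence, since each newly-born hole is alive in the complex immediately after its birth. Summing over all possible birth thresholds gives $b_1 \leq \binom{\rho}{2}(K_d-1)\rho = \tfrac{1}{2}(K_d-1)\rho^2(\rho-1)$, which is the upper inclusion.

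For the lower inclusion I would produce, for every integer $k$ with $0\leq k\leq \lfloor \rho/2\rfloor - 1$, a concrete point cloud realizing exactly $k$ one-dim holes. Place a $(k+1)\times 2$ axis-aligned rectangular lattice of unit spacing in $\mathbb{R}^d$, using $2(k+1)$ points. A short Euler-characteristic computation at $\epsilon=1$ (vertices $2(k+1)$, edges $3k+1$, no 2-simplices yet) gives $\chi = 1-k$, so the connected complex has exactly $k$ one-dim holes, each born at $\epsilon=1$ and dying at $\epsilon=\sqrt{2}$ when the unit cells become 3-simplices of the Vietoris-Rips complex. Then place the remaining $\rho-2(k+1)$ points collinearly along a line far from the lattice; any strictly collinear configuration produces no 1-dim cycles in its Vietoris-Rips filtration, because a would-be cycle among collinear points has the property that every subset of its vertices is already pairwise within the current threshold, and so the cycle is immediately triangulated by the corresponding higher-dimensional simplices.

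The main obstacle is ensuring that the interaction between the lattice and the distant collinear extras does not introduce spurious 1-dim features once $\epsilon$ becomes large enough to form edges bridging the two regions. I would handle this by choosing the distance $M$ between the lattice and the extras, together with the spacing of the extras, large enough that whenever a first bridging edge is added, every other edge of any candidate cycle that could arise is present and the corresponding 2-simplices are already in the complex. This reduces to checking a few inequalities coming directly from the defining property of the Vietoris-Rips complex that every set of pairwise $\epsilon$-close points forms a simplex, and so every potential bridging cycle is a boundary at the moment it first becomes a cycle, contributing nothing to $b_1$.
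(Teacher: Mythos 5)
Your proof follows essentially the same route as the paper's: for the upper inclusion, you multiply Goff's bound $(K_d-1)\rho$ (Lemma~\ref{lemma:goff}) on the number of holes alive at any single scale by the at most $\binom{\rho}{2}$ scales at which the complex can change; for the lower inclusion, you use the same configuration the paper uses --- a two-row unit-spaced grid of $2(k+1)$ points producing exactly $k$ square-shaped holes, with the leftover $\rho-2(k+1)$ points placed on a distant line. Your Euler-characteristic verification that the grid carries exactly $k$ one-dimensional classes at $\epsilon=1$ is a nice addition the paper omits, and you are right to flag explicitly (as the paper does not) that one must rule out spurious classes created by bridging edges at larger scales.

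However, one step of your argument is wrong as stated: the claim that a would-be cycle among collinear points ``has the property that every subset of its vertices is already pairwise within the current threshold.'' Counterexample: the points $0,1,2,3$ on a line at threshold $\epsilon=2$ contain the graph $4$-cycle with consecutive vertices $0,2,3,1$ (each consecutive pair, including $1$ back to $0$, is at distance $\le 2$), yet $|0-3|=3>\epsilon$, so these four vertices do not span a $3$-simplex and the cycle is not ``immediately triangulated'' in the way you describe. The conclusion you want is nevertheless true --- that particular cycle is the boundary of the two triangles $\{0,1,2\}$ and $\{1,2,3\}$ --- and it holds in general because the Vietoris--Rips complex of a finite subset of a line is the clique complex of an interval graph: the leftmost point of a component is dominated by its nearest right neighbor (anything within $\epsilon$ of the former is within $\epsilon$ of the latter), so each component dismantles vertex by vertex to a point and is contractible, giving no $1$-dimensional homology at any scale. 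With that repair --- a fact the paper itself silently assumes when it arranges $\rho$ points ``on a line'' for the $b_1=0$ case --- your argument goes through.
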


\begin{proof}
    We first show the inclusion $\{0,1,\ldots, \lfloor\frac{\rho}{2}\rfloor -1 \} \subseteq B_1$. To form a point cloud with $\rho$ points that has $b_1 = 0$, simply
    take the $\rho$ points and arrange them on a line. 
    To form a point cloud with $\rho$ points that has 
    $b_1 = \lfloor\frac{\rho}{2}\rfloor-1$, arrange the $\rho$ points in
    two rows each with $\lfloor\frac{\rho}{2}\rfloor$ points. 
    Set the spacing between adjacent points in each of the rows to be
    1 and then place the two rows directly beside each other so that
    for each point in the first row, there is exactly one point in
    the second row at a distance of 1. Adding edges appropriately
    creates $b_1 = \lfloor\frac{\rho}{2}\rfloor-1$ squares with side
    length 1. Thus, creating the Vietoris-Rips complex and
    corresponding diagram gives
    $b_1 = \lfloor\frac{\rho}{2}\rfloor-1$.
    For an illustration of the arrangement, see~\cref{fig:Holesplot2}.
    
    To form a point cloud with $\rho$ points that has $b_1 \in \{1,2,\dots \lfloor\frac{\rho}{2}\rfloor-2 \}$,
        arrange $2 (b_1 +1)$ points in two rows as in~\cref{fig:Holesplot2}. 
        Arrange the other $\rho-2(b_1 +1) $ points in a line with the minimum distance
        from any points in the line to points of the two rows such that it is greater than
        or equal to 1. 
        Then exactly $b_1$ holes are formed from the two rows, with no holes formed by the
        line. For an illustration, see~\cref{fig:Holesplot3}. 
    
    Next, we verify the inclusion  $B_1 \subseteq \{0,1, \ldots, \frac{1}{2}(K_d-1) \rho^2 (\rho-1) \}$.
        By Lemma~\ref{lemma:goff}, the number of 1-dim holes in the Vietoris-Rips complex for a fixed radius $\epsilon$ for the point cloud is bounded above by $(K_d-1) \rho$.
        The homology of the Vietoris-Rips complex changes at most ${\rho \choose 2}$ times as the radius $\epsilon$ increases due to the maximum of ${\rho \choose 2}$ distinct distances between points in the point cloud.
        Therefore, there can be at most $\frac{1}{2}(K_d-1) \rho^2 (\rho-1)$ 1-dim holes formed over the entire evolution of the Vietoris-Rips complex. 
        This gives the desired bound of $b_1 \leq \frac{1}{2}(K_d-1) \rho^2 (\rho-1)$. 
    $\qed$
\end{proof}

\begin{figure}
    \centering
    \begin{subfigure}[b]{0.40\textwidth}
        \centering
        \centering
        \includegraphics[width = 2.2in]{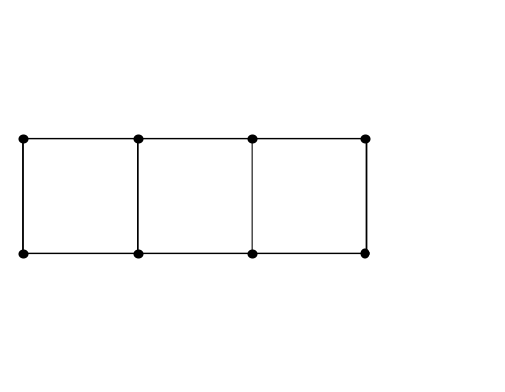}
        \vspace{-1cm}
        \caption{}
        \label{fig:Holesplot2}
        
    \end{subfigure}
    \hfill
    \begin{subfigure}[b]{0.49\textwidth}
        \centering
        \includegraphics[width = 2.2in]{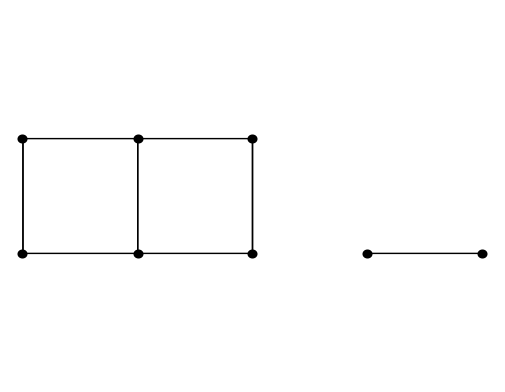}
        \vspace{-0.73cm}
        \caption{}
        \label{fig:Holesplot3}
    \end{subfigure}

    \caption{An example of 8-point arrangements to visualize the proof of 
        Proposition~\ref{proposition:holes}. (\textsc{a}) A 3-hole configuration 
        vs.~(\textsc{b}) a 2-hole configuration.}
\end{figure}

Now, let $N$ point clouds be generated from some process, and $N$ corresponding persistence diagrams be
created. For each persistence diagram $X_i^{k}, k \in \{0,1 \}, i = 1, \ldots, N$, record the 
cardinality $b_0^i$ of the 0-dim diagram and the cardinality $b_1^i$ of the 1-dim diagram.
Let $\boldsymbol{b_0} \in \mathbb{R}^{N \times 2}$ be the predictor matrix whose rows are $[1 \; b_0^i]$ and
$\boldsymbol{b_1} \in \mathbb{R}^{N}$ be the vector of responses with entries $b_1^i$.
Proposition \ref{proposition:holes}
gives that the possible range of $\boldsymbol{b_1}$ is increasing as $\boldsymbol{b_0}$ grows, 
which yields that an increase in variance as $\boldsymbol{b_0}$ grows may be 
present, i.e.,~heteroscedasticity exists.
Thus the analysis of the change in number of 1-dim holes as the size of the point cloud changes needs to
account for heteroscedasticity in order to capture the non-constant variance behavior.
Therefore to estimate the number of 1-dim holes, we use weighted least squares 
as in \cite{efron_hastie_2016}. 
If $\mathbf{W} \in \mathbb{R}^{N \times N}$ is the weight matrix $\mathbf{W} = \textrm{diag}(a_1, \ldots,
a_{N})$, then a weighted least-squares regression can be found for $\boldsymbol{b_1} =  \boldsymbol{b_0}
\boldsymbol{\gamma} + \boldsymbol{\epsilon}$, where $\epsilon_i \sim \mathcal{N} (0, \sigma_i^2)$. 
The approximation is then given by $\mathbf{b_0} \boldsymbol{\hat{\gamma}}  = \boldsymbol{b_1} $, with
$\hat{\boldsymbol{\gamma}} = (\mathbf{b_0}^T \mathbf{W} \mathbf{b_0})^{-1} \mathbf{b_0}^T \mathbf{W}
\boldsymbol{b_1}$. In turn, Proposition \ref{proposition:wls} provides bounds from 
prediction intervals using weighted least squares  for the $d_p^c$ distance.

\begin{proposition}\label{proposition:wls}
    Suppose $N$ point clouds are generated from a process, and $N$ corresponding persistence diagrams are created.
    For each persistence diagram $X_i^k, k \in \{0,1 \}$, record the cardinality of the 0-dim diagram $b_0^i$ and of the 1-dim diagram $b_1^i$.
    Let $\boldsymbol{b_0} \in \mathbb{R}^{N \times 2}$ be the predictor matrix whose rows are $[1 \; b_0^i]$ and $\boldsymbol{b_1} \in \mathbb{R}^{N}$ be the vector of responses of $b_1^i$.
    Assume the model $\boldsymbol{b_1} = \mathbf{b_0} \boldsymbol{\gamma} + \boldsymbol{\epsilon} $, where $\epsilon_i \sim \mathcal{N}(0,\sigma_i^2)$ depends on the value of the input $b_0^i$. 
    Let $X^1$ and $Y^1$ be persistence diagrams generated from the same process as $\boldsymbol{b_0}$
    with $|X^0| = \mu $. 
    Considering the $(1-\alpha) \cdot 100\%$-level prediction interval for $\boldsymbol{b_1}$, the distance $d_p^c(X^1,Y^1)$ is bounded above by
    \begin{equation*}
         \begin{split}
    \left(\min_{\pi \in \Pi_m} \sum_{\ell=1}^n \min(c,\|x_{\ell}^1-y^1_{\pi (\ell)} \|_{\infty})^p + c^p 2
     t_{1-\alpha,N-2} s \sqrt{[1 \; \mu ] (\mathbf{b_0}^T \mathbf{W}
         \mathbf{b_0})^{-1} [1 \; \mu ]^T + \mu} \right)^{\frac{1}{p}}.
 \end{split}
    \end{equation*}
    
\end{proposition}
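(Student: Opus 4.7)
The plan is to start from Definition \ref{def:dpc} applied to $X^1$ and $Y^1$ and then bound the cardinality-difference term $c^p|m-n|$ probabilistically via a weighted least squares prediction interval. Writing $n = |X^1|$ and $m = |Y^1|$ with $n \leq m$, the definition of $d_p^c$ yields
$$d_p^c(X^1,Y^1)^p \;=\; \frac{1}{m}\Bigl(\min_{\pi \in \Pi_m}\sum_{\ell=1}^n \min(c,\|x_\ell^1 - y_{\pi(\ell)}^1\|_\infty)^p \;+\; c^p|m-n|\Bigr).$$
Since $m \geq 1$, discarding the $1/m$ factor only enlarges the right-hand side, so it suffices to produce a probabilistic upper bound for $c^p|m-n|$.

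Next I would treat $n$ and $m$ as two realizations of a 1-dim cardinality produced by the same process that generated the training pairs $\{(b_0^i,b_1^i)\}_{i=1}^N$, conditioned on the 0-dim cardinality equalling $\mu$. Under the assumed linear model $\boldsymbol{b_1} = \mathbf{b_0}\boldsymbol{\gamma} + \boldsymbol{\epsilon}$ with $\epsilon_i \sim \mathcal{N}(0,\sigma_i^2)$ and weight matrix $\mathbf{W}$, standard weighted least squares theory (as in \cite{efron_hastie_2016}) produces the $(1-\alpha)\cdot 100\%$ prediction interval at $b_0 = \mu$
$$\widehat{b_1}(\mu) \;\pm\; t_{1-\alpha,N-2}\,s\,\sqrt{[1\;\mu](\mathbf{b_0}^T\mathbf{W}\mathbf{b_0})^{-1}[1\;\mu]^T \;+\; \mu},$$
where the first summand under the square root is the usual estimation variance of $\widehat{b_1}(\mu)$ and the $+\mu$ term encodes the heteroscedastic residual variance of a new observation whose input is $b_0 = \mu$. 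Because both $n$ and $m$ lie in this interval at the stated confidence level, the diameter of the interval bounds their separation, giving
$$|m-n| \;\leq\; 2\,t_{1-\alpha,N-2}\,s\,\sqrt{[1\;\mu](\mathbf{b_0}^T\mathbf{W}\mathbf{b_0})^{-1}[1\;\mu]^T + \mu}.$$

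Substituting this bound into the simplified expression for $d_p^c(X^1,Y^1)^p$ from the first step and taking $p$-th roots delivers the stated inequality. The main obstacle is justifying the exact form of the prediction interval: one must argue carefully that the heteroscedastic structure $\sigma_i^2 \propto b_0^i$ used to construct $\mathbf{W}$ forces the variance of a new observation at $b_0=\mu$ to contribute precisely the $+\mu$ summand inside the square root, so that the weighted least squares framework reproduces the formula stated. Once this prediction-interval step is in place, the remainder of the argument is routine algebraic manipulation of Definition \ref{def:dpc}.
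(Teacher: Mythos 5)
Your proposal is correct and follows essentially the same route as the paper's own proof: both bound the cardinality term $c^p|m-n|$ by the length of the weighted least squares prediction interval at $b_0=\mu$ (with the $+\mu$ term arising exactly as you indicate, from the new-observation variance $\Var(\boldsymbol{\epsilon})/w^*$ with weight $w^*=1/b_0^*$), and then substitute into Definition~\ref{def:dpc}. Your explicit handling of the discarded $1/m$ factor is in fact slightly more careful than the paper's, which passes over that step silently.
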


\begin{proof}
    Prediction intervals can be constructed for the cardinality of
    a 1-dim diagram for an instance of point cloud size ${b_0}^*$ using
    standard results on weighted least squares.
    Specifically, for level $(1-\alpha) \cdot 100 \%$ a prediction
    interval for the new response $\widehat{b_1}^*$ is sought.
    To calculate this interval for a new response from the mean
    predicted response $\widehat{b_1}^* =
    \widehat{\boldsymbol{\gamma}}{b_0}^*$, 
    note that $\widehat{b_1}^*-{b_1}^*$ has the distribution
    $\frac{\widehat{b_1}^*-{b_1^*}}{\textrm{Var}(\widehat{b_1}^*-{b_1}^*)} \sim t_{N-2}.$ 
    Also, $\Var(\widehat{b_1}^*-{b_1}^*) = \Var
    (\boldsymbol{\epsilon}) [1 \; {b_0}^*] (\mathbf{b_0}^T \mathbf{W} \mathbf{b_0})^{-1} [1 \; {b_0}^*]^T + \frac{\Var
        (\boldsymbol{\epsilon})}{w*}$, where $w^* = \frac{1}{b_0^*}$, the
    weight corresponding to ${b_0}^*$. 
    Prediction intervals for $b_1^*$ are thus $\widehat{b_1}^* \pm
    t_{1-{\alpha / 2},N-2} s \sqrt{[1 \; {b_0}^*]  (\mathbf{b_0}^T 
        \mathbf{b_0})^{-1} [1 \; {b_0}^*]^T  + {b_0}^*},$ where $s^2 =
    \frac{\boldsymbol{\widehat{\epsilon}}^T \mathbf{W} \boldsymbol{\widehat{\epsilon}}}{N-2}$,
    the unbiased estimator for $\Var(\boldsymbol{\epsilon})$, using
    the residuals $\boldsymbol{\widehat{\epsilon}}$.
    Thus the cardinality difference term in the calculation of the $d_p^c$ distance as in Eq.~\eqref{eqn:dpc} is bounded above by the length of the prediction interval with $(1-\alpha) \cdot 100\%$-level confidence.
    Substituting this length into Eq.~\eqref{eqn:dpc} gives the result.
    $\qed$
\end{proof}

    
    \section{Classification of Materials Data}\label{sect:classification Methodology}

Here we describe the $\dpc$-distance based classification of crystal structures of
high-entropy alloys (HEAs)
using atom probe tomography (APT) experiments.
    Recall that the building blocks of HEAs are either body-centered cubic (BCC) or face-centerd cubic (FCC).
    Topological considerations are a natural fit for this problem since BCC and FCC crystal structures enjoy
    a different atomic configuration within a unit cell.
    Indeed, the BCC structure has one atom at its center, but the FCC contains a 
    void (recall~\cref{fig:BCC,fig:FCC}). This distinction is 
    important from the viewpoint of persistent homology.

\begin{figure}
    \centering
    \includegraphics[width=\textwidth]{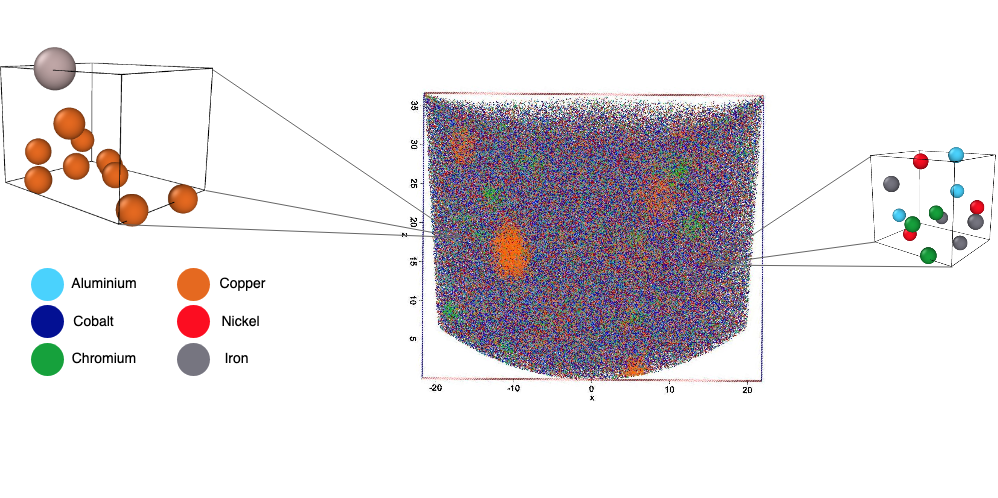}
    \caption{Image of APT data with atomic neighborhoods shown
            in detail on the left and right. Each
            pixel represents a different atom, the neighborhood of which is considered. Certain patterns with distinct crystal structures exist, e.g., the orange region
            is copper-rich (left), but overall no pattern is identified.
            Putting a single atomic neighborhood under a
            microscope, the true crystal structure of
            the material, which could be either
            BCC (\cref{fig:BCC}) or FCC (\cref{fig:FCC}), is not revealed.
            This distinction is obscured due to experimental noise.}\label{fig:APT}
\end{figure}

However, topology alone is insufficient to distinguish between noisy and sparse BCC and FCC lattice
structures accurately. If we count the number of atoms in a unit cell 
(see~\cref{fig:BCC,fig:FCC}) one may see that a BCC unit 
cell has two atoms, one at the center and
    $1/8^{th}$ of an atom at the unit cell's corners, as it shares part of these corner
    atoms with its neighboring cells. Similarly, an FCC unit cell has four atoms;
    the same $1/8^{th}$ of the corner atoms plus one-half of each of the six 
    atoms on the cell's faces. In both cases, the atoms on the faces and lattice points are
    shared with the cell's neighbors and are only counted as a proportion contributing 
    to the unit cell.

Another way to see this difference in cardinality is by plotting the number of connected components against 
the number of holes for both BCC and FCC crystal structures.
\cref{fig:Jitter_plots_a,fig:Jitter_plots_b} depict that FCC structures have larger point clouds, and consequently,
a greater number of connected components. Observe in~\cref{fig:bcc_fcc_pd} 
that the number of connected components and 1-dim holes are greater
in the FCC diagrams than the BCC diagrams.
Consequently, we must account for more than just homological differences
    when considering persistence diagrams derived from these atomic neighborhoods. Variability in the size of the underlying point clouds must be considered, as verified in Proposition~\ref{proposition:wls}. 
    Given the salient topological and
    cardinality differences between these two crystal structures, we seek to
    classify their associated persistence diagrams via these essential differences. To that end, we consider the $d_p^c$ distance given in Eq.~\eqref{eqn:dpc}.

In the numerical experiments, the point clouds (atomic neighborhoods) are extracted from
        a sample containing approximately 10,000 atoms. We remove
        atoms, to create spasity, and add Gaussian noise to the larger sample mirroring
        those levels found in true APT experimental data. To 
        create these neighborhoods, we consider a fixed volume around
        each atom in the perturbed sample and those atoms within 
        the volume are recorded for our classification methodology. Here we consider
    $N=1,000$ synthetic atomic neighborhoods ($N_{BCC} = 500$ BCC structures and $N_{FCC} = 500$ FCC structures) with noise and sparsity levels similar to those found in true APT experiments. Let $\bm{q} = (q_1,\dots, q_M)^T$ be the atoms' positions within an atomic neighborhood. Applying the persistent homology machinery of 
    Section~\ref{sect:ph}, one
    obtains the associated persistence diagram denoted by $X_q$, 
    see~\cref{fig:bcc_fcc_pd}.
For our classification problem, we are interested in the conditional probability,
$\widetilde{\pi}_j = \mathbb{P}(Y_i=j\mid X_i)$,
of the persistence diagram $X_i$ being in class $Y_j$, 
for $j=0$ (BCC) or $j=1$ (FCC).
To that end, we consider a logistic regression model,
\begin{equation}
\log\left(\frac{\widetilde{\pi}_j}{1-\widetilde{\pi}_j}\right) = \alpha + \sum_{i=1}^L\,\varphi_i(\bm{\Sigma}_i), \label{eq:additive_regression}
\end{equation}
where $\varphi_i$ is some pertinent smooth function, and $\bm{\Sigma}\in\R^{N\times 8}$ is the feature matrix whose $i^{th}$ row is
\begin{equation}
    \bm{\Sigma}_i = (\E_{i,B}^{0}, \E_{i,B}^{1}, \Var_{i,B}^{0}, \Var_{i,B}^{1}, 
    \E_{i,F}^{0}, \E_{i,F}^{1}, \Var_{i,F}^{0}, \Var_{i,F}^{1}).\label{eqn:feature_matrix}
    \end{equation}
    For any persistence diagram $X^{k}_i$ with $k$-dimensional homology $(k=0, 1), \\ \E_{i,B}^{k} =  \frac{1}{N_{BCC} } \sum_{j=1}^{N_{BCC}} \dpc(X_i^{k},X_j^{k})$ and $\Var_{i, B}^{k} = \frac1{N_{BCC}-1}\sum_{j=1}^{N_{BCC}}\,(\dpc(X_i^{k},X_j^{k}) - \E_{i,B}^{k})^2$
    respectively yield the average and variance of the distance between $X^{k}_i$ and the collection
    of all BCC persistence diagrams. Similarly, $\E_{i, F}^{k}$ and $\Var_{i, F}^{k}$ are the average and variance of the distance between $X^{k}_i$ and the collection
    of all FCC persistence diagrams.

\begin{figure}
    \centering
    \begin{subfigure}[t]{0.48\textwidth}
        \centering
        \includegraphics[scale=0.3]{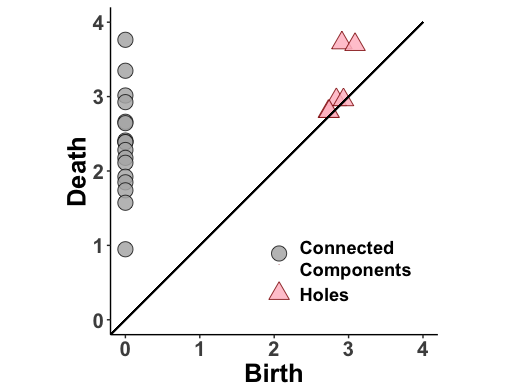}
        \caption{}\label{fig:bcc_pd}
    \end{subfigure}
    \hfill%
    \begin{subfigure}[t]{0.48\textwidth}
        \centering
        \includegraphics[scale=0.3]{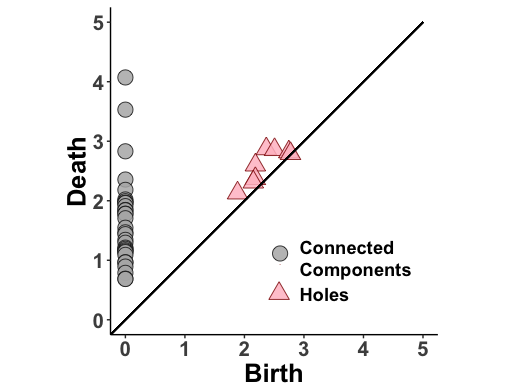}
        \caption{}\label{fig:fcc_pd}
    \end{subfigure}
    \caption{Example of persistence diagrams generated by (\textsc{a}) a 
        BCC lattice, and (\textsc{b}) FCC lattice.
        The data has a noise standard deviation of $\tau=0.75$ and 67\% of the atoms are missing. Note that the BCC diagram has two 
        prominent (far from the diagonal) points representing 1-dim holes and fewer connected components and 1-dim holes than does the FCC diagram.}
        \label{fig:bcc_fcc_pd}
\end{figure}    

We perform 10-fold cross validation on the 1,000 synthetic crystal structures. In other words, the data is
divided randomly into 10 folds, and 9 folds of the data are used as a training set. For any unknown crystal
structure in the remaining fold, the feature vector of the unknown crystal structure is computed according to
Eq.~\eqref{eqn:feature_matrix} and used as input for the decision tree classifier. 
Similarly, the other 9 folds are each used once as test sets employing the same procedure.
The tree
finds the best fit for the features from the additive model in Eq.~\eqref{eq:additive_regression}
and returns the class of the unknown structure. 

\begin{figure}[]
    \centering
    \begin{subfigure}[t]{0.49\textwidth}
        \centering
        \includegraphics[scale=0.28]{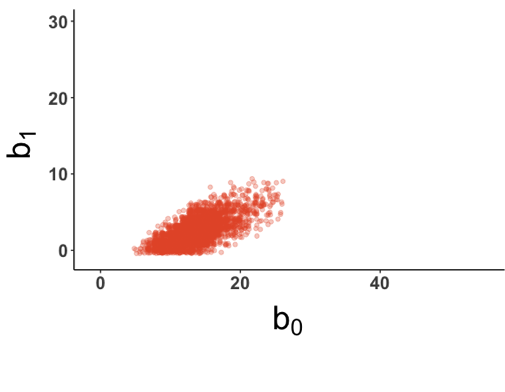}
        \caption{}\label{fig:CI_plots_a}
    \end{subfigure}
    \hfill%
    \begin{subfigure}[t]{0.49\textwidth}
        \centering
        \includegraphics[scale=0.28]{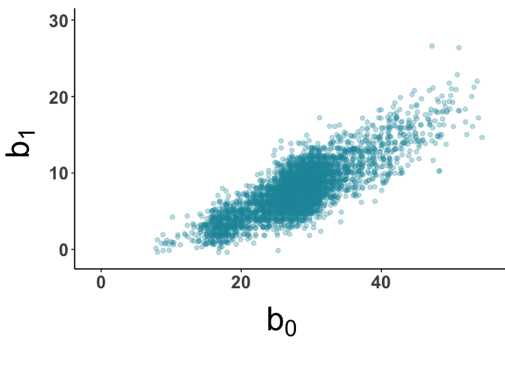}
        \caption{}\label{fig:CI_plots_b}
    \end{subfigure}
    
    \begin{subfigure}[t]{0.49\textwidth}
        \centering
        \includegraphics[scale=0.28]{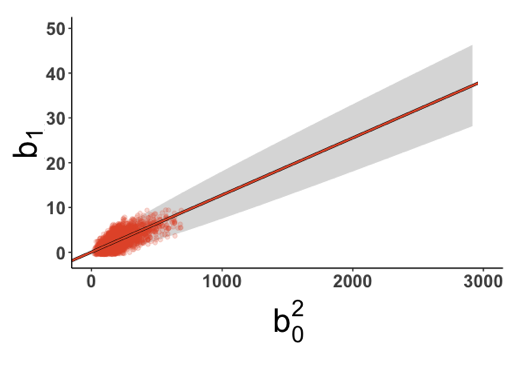}
        \caption{}
        \label{fig:Jitter_plots_a}
    \end{subfigure}
    \hfill%
    \begin{subfigure}[t]{0.49\textwidth}
        \centering
        \includegraphics[scale=0.28]{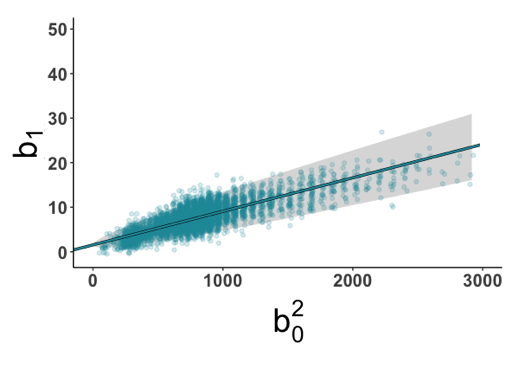}
        \caption{}
        \label{fig:Jitter_plots_b}
    \end{subfigure}
    
    \begin{subfigure}[t]{\textwidth}
        \centering
        \includegraphics[scale=0.1]{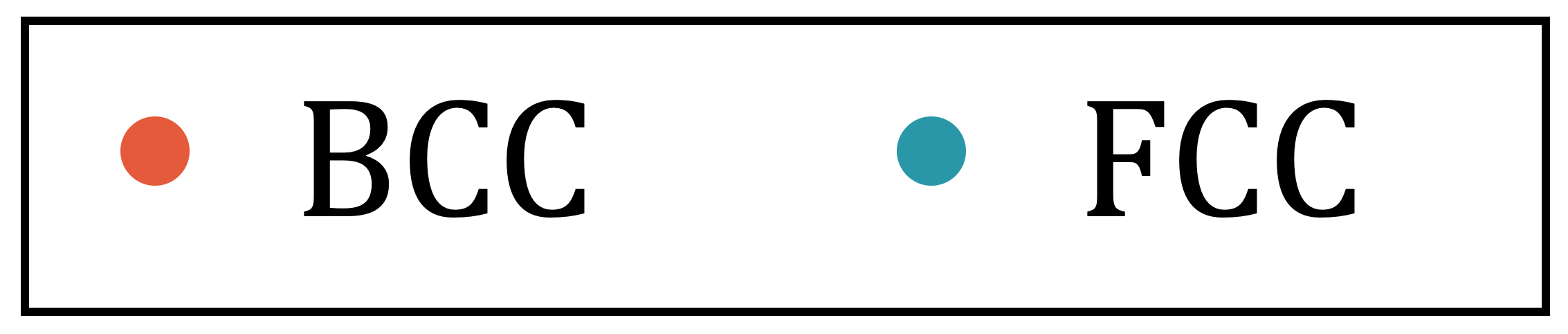}
    \end{subfigure}
    
    \caption{\emph{Top:} Number of connected components (in this case atoms), $\mathbf{b_0}$, against  the
        number of 1-dim homological features, $\mathbf{b_1}$, of the persistence diagrams. 
        One can see the presence of heteroscedasticity since the variance of $\mathbf{b_1}$ increases as
        $\mathbf{b_0}$ increases. 
        \emph{Bottom:} Same as in top but using a quadratic transformation of the predictor variable, along
        with the weighted least squares fit line and 95\% prediction intervals provided by Proposition
        \ref{proposition:wls}.}
\end{figure}     

\begin{table}[]
    \centering    
    \begin{tabular}{*3l}    \toprule
        $\tau$ & $c$-value & \emph{Accuracy} \\\midrule
        0.0 & 0.01 & 99\% \\
        0.25 & 0.05 & 99.4\% \\
        0.75 & 0.03 & 96.5\% \\
        1.0 & 0.13 & 96.4\% \\ \bottomrule
        \hline
    \end{tabular}
    \caption{The atomic positions in the APT data is $\mathcal{N}(0, \tau^2)$ distributed with 67\% of the atoms missing. We employ the $\dpc$ classifier, where $c$ has been optimized in each noise level case. The accuracy in the 10-fold cross validation is listed in the third column.}\label{tab:c_vals}
\end{table}

\begin{figure}
    \centering
    \includegraphics[scale=0.75]{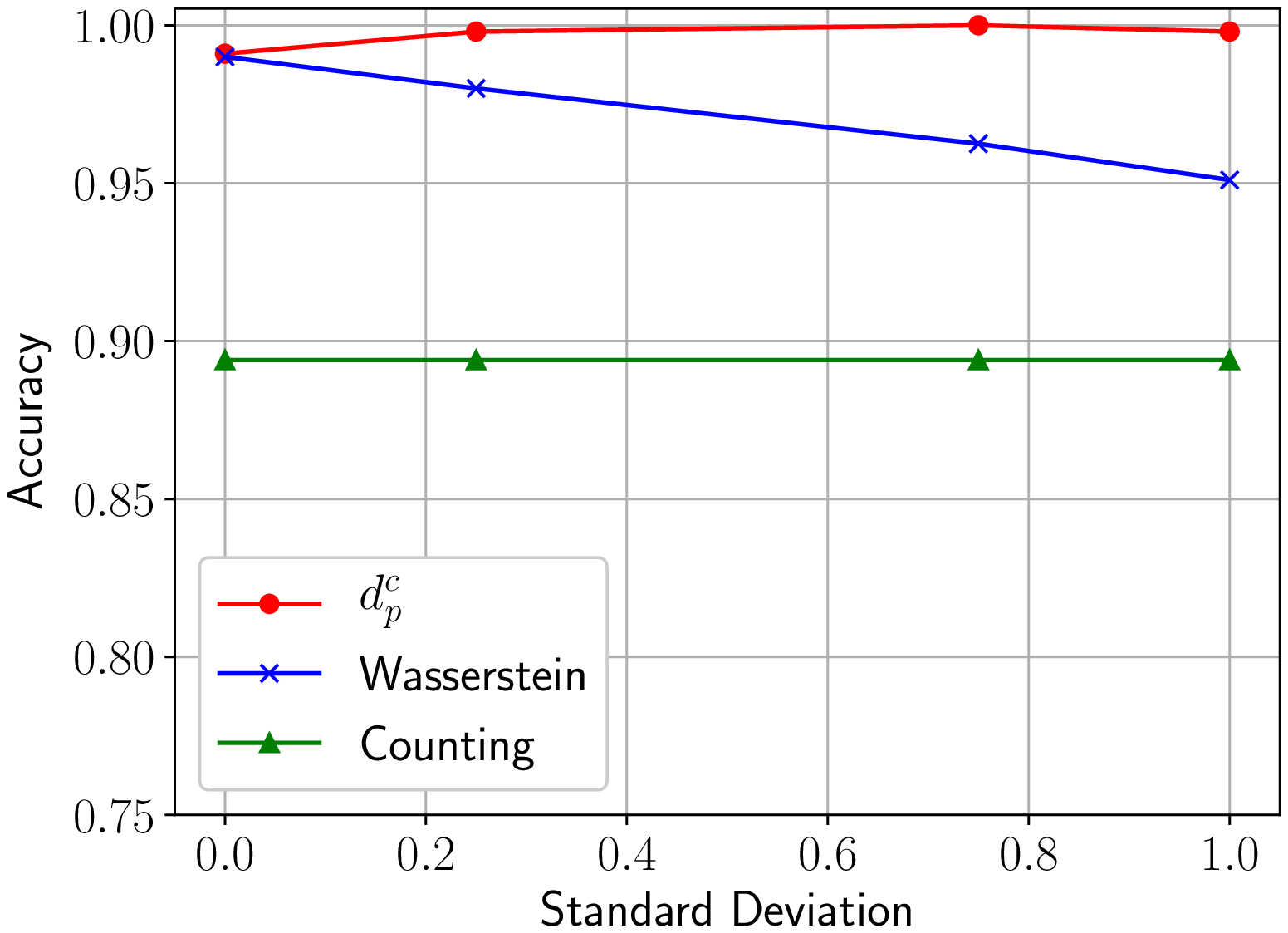}
    \caption{10-fold cross validation accuracy scores for $d_p^c$ (red), Wasserstein (blue), and counting (green) classifiers, plotted against
        different standard deviations, $\tau$, (see Table \ref{tab:c_vals}) of the normally distributed noise of the atomic positions. In each
        instance, the sparsity has been fixed at 67\% of the atoms missing, as in a true APT experiment.}
    \label{fig:cross_val}
\end{figure}

For our numerical experiments, 
the persistence diagrams are constructed using the C++ Ripser 
software, and the scikit-learn decision tree implementation.  The 
studies~\cite{miller2012future, santodonato2015deviation} estimate 
that approximately 65\% of the data is missing.
However, an estimate of the experimental noise is not provided. 
    In fact, as noted by~\cite{LarsonDavidJ2013LEAP,Miller2014APT},
    the noise varies between experiments and specimens.
    Our synthetic data replicates this resolution
    by drawing from a 
    Gaussian~\cite{gault2010spatial,mcnutt2017interfacial,moody2011lattice}, $\mathcal{N}(0, \tau^2)$, with four different levels of variance to give a 
    more representative approximation of true APT datasets. Computing the $d_p^c$ distances with $p=2$ to imitate typical Euclidean distance, we find different values
    of $c$ via a grid search for these four different levels of variance, $\tau^2,$ in both 0- and 1-dim homology, employing a different dataset than is used
        for the classification. In each case, 
    a geometric sequence of 10 values between $0.01$ and $1$
    is taken into account.
    The results and the associated
    algorithmic accuracy are presented in Table~\ref{tab:c_vals}.

As a comparison the feature matrix
    in Eq.~\eqref{eqn:feature_matrix} is also calculated using 
    the Wasserstein distance, choosing $p=2$. Moreover, we adopt a counting classifier which takes into account only the number of points in an atomic neighborhood as the input feature in the tree classifier.
    Our $\dpc$ classifier successfully dichotomizes these 1,000 persistence diagrams generated by 
    BCC and FCC lattice structures
    at better than 96\% accuracy, where accuracy is measured as (1 - Misclassification rate). The $\dpc$ classifier outperforms both the Wasserstein and the counting classifier, see~\cref{fig:cross_val}. These results demonstrate that using just the differences in cardinality 
    between the two classes of crystal structures is insufficient to distinguish between
    them.

As demonstrated in Proposition~\ref{proposition:wls}, there is a relationship between the number of connected
components, $\mathbf{b_0}$, (number of atoms in this case) and the number of 1-dim homological features,
$\mathbf{b_1}$, in the persistence 
diagrams~\cref{fig:CI_plots_a,fig:CI_plots_b} demonstrate
this relationship, as well as the presence of heteroscedasticity between $\mathbf{b_0}$ and $\mathbf{b_1}$,
also verified by the Breusch-Pagan test~\cite{breusch1979simple} with a $p-$value of $9.3\times 10^{-54}$
for FCC cells and a $p-$value of $2.01\times 10^{-47}$ for BCC cells.
    \cref{fig:CI_plots_a,fig:CI_plots_b} also provide 95\% prediction intervals for $\mathbf{b_1}$
    based on the weighted least squares regression analysis of
    Proposition~\ref{proposition:wls}. To that end, this exact fine balance between the number of atoms in a
    neighborhood and the associated topology created by the positions of these atoms in the cubic cell is
    captured by the $\dpc$ distance.

    \section{Conclusions}\label{sect:conclusions}

This work combined statistical learning and topology to
classify the crystal structure of high entropy alloys using atom
probe tomography (APT) experiments. These APT experiments produce a noisy and sparse dataset,
from which we extract
atomic neighborhoods, i.e., atoms within a fixed volume forming a point cloud, and apply the machinery of Topological Data Analysis (TDA) to these point clouds. 
Viewed through the lens of TDA, these point clouds are a rich source of topological information. Indeed,
employing persistent homology, we summarized the shape of these atomic neighborhoods and classified their
crystal
structures as either BCC or FCC. The classifier was based on features derived from the new distance
on persistence diagrams, denoted herein by $d_p^c$. This distance is
different from all other existing distances on
persistence diagrams in that it explicitly penalizes
differences in cardinality between diagrams. 

We proved a stability result for the $d_p^c$ distance, demonstrating that 
small perturbations of the underlying point clouds resulted in small changes to the $d_p^c$
distance. We also provided guidance for the choice of the $c$ parameter by looking at 
confidence bounds using a function of the cardinalities of the persistence diagrams. 

The classification results presented herein could aid materials science researchers by providing
a previously unavailable representation of the local atomic 
environment of high entropy alloys from APT data. The methodology need not be limited to a binary choice
between BCC and FCC, e.g., entropy-stabilized oxides \cite{rost2015entropy} are amenable to APT
characterizations and our process could be generalized to those materials as well. Moreover, as APT
experiments produce datasets on the order
of 10 million atoms, materials science research has moved into 
the realm of big data, and the necessary computational and
modelling tools have yet to be developed for this regime according to \cite{katsoulakis2017special}. 
The $\dpc$ classifier, coupled with our ongoing research of quantifying local atomic distributions as
in~\cite{spannaus}, aims to recover global atomic structure of high entropy alloys.

\section*{Acknowledgments}

The authors would like to thank the anonymous associate editor and two anonymous reviewers for their insightful
comments which substantially improved the manuscript. Moreover,
the authors would like to thank Professor David J.~Keffer (Department of Materials Science and Engineering at The University of Tennessee) for providing 
the codes which create the realistic APT datasets and for useful 
discussions, as well as Professor Kody J.H. Law (School of Mathematics
at the University of Manchester)  for
insightful discussions.

\bibliographystyle{acm}
\bibliography{tex/references.bib}

\end{document}